\newcommand{\primal}[1]{\mathcal{G}_{#1}}
\tikzstyle{tdnode} = [draw,rounded corners,top color=vertexTopColor,bottom color=vertexBottomColor,minimum size=1.5em]
\tikzstyle{stdnode} = [tdnode, font=\scriptsize]
\tikzstyle{stdnodecompact} = [stdnode, inner sep = 1.5pt, outer sep = 0.1pt]
\tikzstyle{stdnodetable} = [stdnode, inner sep = 1.5pt, outer sep = 0]
\tikzstyle{stdnodenum} = [minimum size=1.5em, font=\scriptsize]
\tikzstyle{tdedge} = [-,draw,thick]
\tikzstyle{tdlabel} = [draw=none, rectangle, fill=none, inner sep=0pt, font=\scriptsize]
\tikzstyle{squigarrow} = [->,line join=round,decorate, decoration={
\tikzstyle{dashedarrow} = [->,dashed]
\colorlet{vertexTopColor}{white}
\colorlet{vertexBottomColor}{black!10}
\newcommand{\QBFSAT}{\textsc{QSat}\xspace}
\newcommand{\ASP}{\textsc{Asp}\xspace}
\newcommand{\SAT}{\textsc{Sat}\xspace}
\newcolumntype{H}{>{\setbox0=\hbox\bgroup}c<{\egroup}@{}}
\newcommand{\NP}{\ensuremath{\textsc{NP}}\xspace}
\newcommand{\PSPACE}{\ensuremath{\textsc{PSPACE}}\xspace}
\newcommand{\PP}{\ensuremath{\textsc{P}}\xspace}
\newcommand{\FPT}{\ensuremath{\textsc{FPT}}\xspace}
\newcommand{\futuresketch}[1]{}
\newcommand{\normal}{\problemFont{Normal}\xspace}
\newcommand{\tight}{\problemFont{Tight}\xspace}
\newtheorem{theorem}{Theorem}
\newtheorem{proposition}{Proposition}
\newcommand{\Card}[1]{\left|#1\right|}
\newcommand{\CCard}[1]{\|#1\|}
\newcommand{\cASP}{\ASP}
\newcommand{\sharpSAT}{\textsc{\#}\SAT}
\newcommand{\problemFont}[1]{\textsc{#1}}
\newcommand{\prob}[0]{\ensuremath{\mathtt{P}}}
\newcommand{\inst}[0]{\ensuremath{\mathcal{I}}\xspace}
\DeclareMathOperator{\poly}{\ensuremath{{poly}}}
\newcommand{\Nat}{\ensuremath{\mathbb{N}}}
\newcommand{\bigO}{\ensuremath{{\mathcal O}}}
\newcommand{\twc}[2]{\ensuremath{\mathtt{TW}_{#1}^{#2}}}
\newlength\problemlength
\newcommand\complexityclass[2]{%
\medskip
\begin{center}
\fbox{%
\begin{minipage}{.93	\linewidth}%
\begin{list}{}{\labelwidth\problemlength \labelsep.7em \rightmargin1.5em
\leftmargin\problemlength \advance\leftmargin by3em
\parsep0ex \itemsep.2ex plus.1ex}
\item[{Class:\hfill}] {#1}
\item[{Definition:  \hfill}] #2
\end{list}
\end{minipage}
}
\end{center}
\medskip
}
\def\hy{\hbox{-}\nobreak\hskip0pt}
\DeclareMathOperator{\tower}{\ensuremath{\mathsf{tower}}}
\DeclareMathOperator{\var}{\mathsf{var}}
\title{Advanced Tools and Methods for Treewidth-Based Problem Solving -- Extended Abstract
}
\author{Markus Hecher\footnote{TU Wien, Logic and Computation, Favoritenstraße 9--11,
    1040 Vienna, Austria, \protect\url{hecher@dbai.tuwien.ac.at}. This is an extended abstract of a binational PhD thesis~\protect\cite{Hecher21} that has been awarded with the EurAI Dissertation Award 2021, see~\protect\url{https://www.eurai.org/award/markus-hecher}. 
For a detailed description of the publications that form the basis of the corresponding chapters of the thesis, we refer to the introduction chapter~\protect\cite[Section 1]{Hecher21}.}} 
\begin{document}

\maketitle



\begin{abstract}
Computer programs, so-called solvers, for solving the well-known Boolean satisfiability problem (\SAT) have been improving for decades.
Among the reasons, why these solvers are so fast, is the implicit usage of the formula's structural properties during solving. One of such structural indicators
is the so-called treewidth, which tries to measure how close a formula instance is to being easy (tree-like). 
%
This work focuses on logic-based problems and treewidth-based methods and tools for solving them. Many of these problems are also relevant for knowledge representation and reasoning (KR) as well as artificial intelligence (AI) in general.
We present a new type of problem reduction, which is referred to by \emph{decomposition-guided (DG)}.
This reduction type forms the basis to solve a problem for quantified Boolean formulas (QBFs) of bounded treewidth that has been open since 2004.
The solution of this problem then gives rise to a \emph{new methodology} for proving precise lower bounds for a range of further formalisms in logic, KR, and AI.
Despite the established lower bounds, we implement an algorithm for solving extensions of \SAT efficiently, by directly using treewidth.
Our implementation is based on finding abstractions of instances, which are then incrementally refined in the process.
Thereby, our observations confirm that treewidth is an important measure that should be considered in the design of modern solvers.
%
 %
%
%
%
\end{abstract}

\section{Introduction}

In the last decades, there was a notable progress in solving the well-known \emph{Boolean satisfiability (\SAT)}
problem~\cite{BiereHeuleMaarenWalsh09,KleineBuningLettman99}, which can be witnessed by powerful \SAT solvers 
that are also strikingly fast.
On the one hand, these solvers can decide the existence of a satisfying assignment for Boolean formulas with millions of variables, but on the other hand \SAT is one of the most prominent \NP-complete problems~\cite{Cook71}.
In consequence, this means bad news for solving this problem efficiently, assuming $\PP\neq \NP$, which is the gold standard assumption in computational complexity.
Over the time, even stronger assumptions like the \emph{exponential time hypothesis (ETH)}~\cite{ImpagliazzoPaturiZane01} emerged,
which implies exponential solving time
in the number of variables in the worst case. Nowadays, ETH is widely believed among researchers and therefore oftentimes assumed for establishing theoretical results.
Still, from a scientific point of view, it is not completely clear why in practice \SAT solvers
are dealing so well with a large amount of instances, but there are probably many interleaving reasons
for this observation.
%
%
One of these reasons are \emph{structural properties} of instances that 
are indirectly utilized by the solver's interna, which has been demonstrated at least theoretically~\cite{AtseriasFichteThurley11}.

This thesis deals with such a structural property, which is referred to by treewidth~\cite{RobertsonSeymour86}. 
The \emph{treewidth} is well-studied and measures the closeness of an instance to being a tree (tree-likeness), 
which is motivated by the fact that many
hard problems become easy for the special case of trees or tree-like structures.
This parameter, however, is quite generic and by far not limited to Boolean satisfiability.
In fact, there are further problems parameterized by treewidth that are 
solvable in polynomial time in the instance size when parameterized by treewidth.
Interestingly, also plenty of problems relevant to knowledge representation and reasoning (KR) and artificial intelligence (AI),
which are believed to be even harder than \SAT, can be turned tractable when utilizing treewidth.
One prominent example of such a problem is \QBFSAT, which asks for deciding the validity of a quantified Boolean formula (QBF)~\cite{BiereHeuleMaarenWalsh09,KleineBuningLettman99}, an extension of a Boolean formula where certain variables are existentially or universally quantified.
Complexity-wise it is known that restrictions of this problem reach higher levels of the polynomial hierarchy; in general it is even \PSPACE-complete.
Notably, similar to complexity classes in classical complexity, 
the actual ``hardness'' of such problems when parameterized by treewidth 
is oftentimes quantified by studying precise runtime dependence (levels of exponentiality) on treewidth, see, e.g.,~\cite{PanVardi06,AtseriasOliva14a,MarxMitsou16,LampisMitsou17}. 
\paragraph{Contributions.}
In this work, we study advanced treewidth-based methods and tools for problems in KR and AI.
Thereby, we provide means to establish precise runtime results (\emph{upper bounds}) for prominent 
fragments of the answer set programming (\ASP) formalism, which is a canonical paradigm for solving problems relevant to KR.
Our results are obtained by relying on the concept of dynamic programming 
that is guided along a so-called
tree decomposition in a divide-and-conquer fashion.  
Such a \emph{tree decomposition} is a concrete structural decomposition of an instance, thereby 
adhering to  treewidth. 

Then, we present a new type of problem reduction, which we call a \emph{decomposition-guided (DG) reduction}
that allows us to precisely study and monitor the treewidth increase (or decrease) when
reducing from a certain problem to another problem.
This new reduction type will be the basis for proving a long-open result concerning quantified Boolean formulas. 
Indeed, with this reduction we are able to provide precise \emph{conditional lower bounds} (assuming the ETH) for the problem \QBFSAT when parameterized by treewidth. 
%
More precisely, by relying on DG reductions, we prove that \QBFSAT when restricted to formulas of quantifier rank~$\ell$ 
and treewidth~$k$ cannot be decided in a runtime that is better than~$\ell$-fold exponential in the treewidth
and polynomial in the instance size.\footnote{``$\ell$-fold exponentiality'' refers to a runtime dependence on the treewidth~$k$ that is a tower of~$2$'s of height~$\ell$ with~$k$ on top. More precisely, this indicates runtimes of the form~$\underbrace{2^{{\scaleto{\iddots}{7pt}}^{2^{\bigO(k)}}}}_{\text{height }\ell{+}1}\cdot\poly(n)$, where~$n$ are the number of variables.} 
%
This non-incrementally lifts a known result for quantifier rank~$2$ to arbitrary quantifier ranks,
but yet implies further consequences. 

Even further, the lower bound result for \QBFSAT allows us to design a \emph{new methodology}
for establishing lower bounds for a plethora of problems in the area of KR and AI.
In consequence, we prove that all our upper bounds and DG reductions presented in this thesis
are tight under the ETH.
The lower bound result for \QBFSAT and the resulting methodology also unlocks
a hierarchy of dedicated runtime classes for problems parameterized by treewidth.
These classes can be used to quantify
their hardness for utilizing treewidth and categorize them
according to their runtime dependence on the treewidth.

Finally, despite the devastating news concerning lower bounds,
we are able to provide an efficient implementation of 
algorithms based on dynamic programming that is guided along a tree decomposition.
Our approach works by finding suitable \emph{abstractions} of instances,
which is subsequently refined in a nested (recursive) fashion.
Given the tremendous power of \SAT solvers, our
implementation is \emph{hybrid} in the sense that it heavily
uses such standard solvers for solving certain subproblems
that appear during dynamic programming.
%
%
It turns out that our resulting solver
is quite competitive 
for two canonical counting
problems related to \SAT.
In fact, we are able to solve instances with treewidth upper bounds
beyond 260, which underlines that treewidth might be
indeed an important parameter that should be considered
in modern solver designs.

\medskip

{%
	\begin{table}[t]%
		\fontsize{9}{7.5}\selectfont
		\centering\hspace{-0.0em}%
		\begin{tabular}[t]{@{\hspace{.25em}}p{13em}@{\hspace{-1em}}@{\hspace{.1em}}p{9em}@{\hspace{.2em}}@{\hspace{.2em}}p{7.1em}@{\hspace{.2em}}@{\hspace{.3em}}p{6em}@{\hspace{.25em}}@{\hspace{.15em}}p{6.1em}@{\hspace{-.1em}}}%
\toprule
			\multirow{2}{*}{Problem} &
			\multicolumn{4}{c}{Runtime dependence on treewidth~$k$} 
				\\\cmidrule{2-5}
			& \centering {Exponentiality}  
& \centering {Runtime*} & Upper bound & Lower bound\\\midrule 
			\problemFont{\SAT}
& \centering{single exponential} & \centering{$2^{\Theta(k)}$} & $\vartriangle$\cite{SamerSzeider10b} & $\triangledown$\cite{ImpagliazzoPaturiZane01} \\
			\problemFont{\tight \ASP
} &  \centering{single exponential} & \centering{$2^{\Theta(k)}$} & $\blacktriangle$ Thm.~3.8& $\blacktriangledown$ Prop.~3.9\\
			\problemFont{\normal \ASP
} &  \centering{slightly super exp.} & \centering{$2^{\Theta(k\cdot\log(k))}$} & $\blacktriangle$ \textbf{Thm.~\ref{thm:upperbound}} & $\blacktriangledown$ \textbf{Thm.~\ref{thm:lowerbound}}\\
			\problemFont{$\iota$-Tight \ASP} 
& \centering{slightly super exp.} & \centering{$2^{\Theta(k\cdot\log(\iota))}$} & $\blacktriangle$ Thm.~4.27 & $\blacktriangledown$ Corr.~4.28\\
%
%
			%
			\problemFont{\cASP
}
&  \centering{double exponential} & \centering{$2^{2^{\Theta(k)}}$} & $\vartriangle$\cite{JaklPichlerWoltran09} & $\blacktriangledown$ \textbf{Thm.~\ref{thm:disjasplb}}\\
%
%
%
%
%
%
			\futuresketch{\problemFont{\PPAP} & & & $\blacktriangledown$ \\}
			\futuresketch{\problemFont{\#Projected Guesses to World Views} & & & & & $\blacktriangledown$\\}
			\problemFont{$\ell\hy\QBFSAT$}, quantifier rank $\ell$
& \centering{$\ell${-}fold exponential} & \centering{$\tower(\ell, \Theta(k))$} & $\vartriangle$\cite{Chen04a} & $\blacktriangledown$ \textbf{Thm.~\ref{lab:primqbflb}} \\
%
			%
%
\bottomrule
		\end{tabular}\vspace{-.25em}
		\caption{Excerpt of key findings of this work consisting of runtime results (upper bounds) as well as hardness results (lower bounds) for problems parameterized by treewidth. 
%
%
Ideas of bold-faced statements are briefly sketched in this abstract.
The column ``Runtime*'' does not explicitly show factors, which are \emph{polynomial ($\poly(n)$)} in the instance size~$n$.
The \emph{function~$\tower(\ell, k)$} is a tower of exponents of~$2$'s  of height~$\ell$ with~$k$ on top.
Known upper bounds are indicated by~``$\vartriangle$'', whereas new results are marked by~``$\blacktriangle$''.
Lower bounds assume ETH, where new results are indicated by~``$\blacktriangledown$'' and existing results are given by~``$\triangledown$''.
		}
		\label{tab:overview}
	\end{table}%
}

\vspace{-1.5em}
\paragraph{Overview and Structure.}
Table~\ref{tab:overview} provides a short overview on selected key findings of this work as well as some related existing results.
Thereby we show selected lower and upper bounds that are proved in the thesis.
%
The displayed problems of the table are variants of \SAT, \QBFSAT, as well as important fragments of \ASP, which is an essential formalism for modeling problems
in knowledge representation and reasoning.
%
In the course of the thesis, this table is significantly extended, where we also show consequences for many further problems relevant to knowledge representation and artificial intelligence.
We refer to Chapter~6 of the thesis for this extension~\cite[Table 6.1]{Hecher21}.
%

Each section of this abstract briefly summarizes the corresponding chapter of the thesis. 
Note, however, that due to the space limit only a small frame of findings and only key ideas can be sketched in this paper.
In the next section, we recap some basics. 
Then, in Section~\ref{sec:ubs} we establish new upper bound results via dynamic programming for important fragments of logic programs (\ASP).
In Section~\ref{sec:dg} we illustrate the concept of decomposition-guided (DG) reductions, which holds a crucial role in the thesis.
Section~\ref{sec:lbs} presents new lower bounds for \QBFSAT and \ASP, thereby heavily utilizing DG reductions.
The lower bound result for \QBFSAT enables a novel methodology for proving runtime lower bounds that yields to a hierarchy of 
runtime classes for classifying problems depending on their hardness for utilizing treewidth,
which is briefly outlined in Section~\ref{sec:landscape}.
Despite the established runtime bounds, we show in Section~\ref{sec:solving}, how one can still design efficient and competitive solvers
that heavily rely on utilizing treewidth.

\vspace{-.5em}
\section{Preliminaries}\label{sec:prelims}
A \emph{(Boolean) formula}~$F$ in conjunctive normal form is a conjunction of \emph{clauses}, which is a disjunction of variables or the negation thereof, cf.,~\cite{BiereHeuleMaarenWalsh09}. 
The decision problem \SAT concerns about deciding whether for a given formula~$F$ there exists a satisfying assignment.
A \emph{quantified Boolean formula (QBF)}~$Q=\exists V_1. \forall V_2. \cdots \exists V_\ell. F$ is an extension of a Boolean formula
that additionally quantifies variables either existentially or universally.
The \emph{quantifier rank} of a QBF~$Q=\exists V_1. \forall V_2. \cdots \exists V_\ell. F$ is the number~$\ell$ of alternating quantifiers~\cite{KleineBuningLettman99}. We only consider closed formulas, where we have that~$V_1\cup V_2\cup\ldots \cup V_\ell$ coincides with the variables of~$F$. 
The problem~$(\ell-)\QBFSAT$ asks for a given QBF (of quantifier rank~$\ell$) whether it evaluates to true; $\ell{-}\QBFSAT$ is located on the $\ell$-th level of the polynomial hierarchy.

A \emph{(logic) program}~\cite{GebserKaminskiKaufmannSchaub12,
BrewkaEiterTruszczynski11} is a set of rules, where each rule~$r$ is of the form~$H_r\leftarrow B^+_r, B^-_r$ over sets~$H_r$, $B^+_r$ of variables and set $B^-_r$ of (default negated) variables.
A program is \emph{normal} if~$\Card{H_r}=1$ and \emph{tight}, whenever there are no cyclic dependencies over all rules involving variables in~$B^+_r$ and~$H_r$ of a rule~$r$.
Intuitively, the semantics of this formalism, called \emph{answer set programming (\ASP)}, require that whenever for a rule~$r$ every variable in~$B^+_r$ can be derived, but no variable in~$B^-_r$ is derived, then at least one variable in~$H_r$ has to hold. 
In addition, the \ASP formalism imposes a stability criteria on top, which is based on minimizing the set of those derived variables.
As a consequence, for a given program deciding already the existence of such a set of variables, called \emph{answer set}, is believed to be beyond \NP. 
%
%
%

Assume a given formula, QBF, or logic program~$\mathcal{U}$. Then, $\var(\mathcal{U})$ denotes the \emph{set of variables} of~$\mathcal{U}$.
%
%
Further, the \emph{primal graph~$\primal{\mathcal{U}}$} of~$\mathcal{U}$ is an undirected graph, whose vertices are the variables $\var(\mathcal{U})$ with an edge between two variables, whenever those variables appear together in a clause or rule of~$\mathcal{U}$.
%

A \emph{(tree) decomposition}~$\mathcal{T}=(T,\chi)$ of a graph~$\primal{}$ consists of a tree~$T$ and a function~$\chi$, which assigns every node~$t$ in~$T$ a set of nodes in~$\primal{}$~\cite{RobertsonSeymour86}.
Further, $\mathcal{T}$ has to fulfill (i) \emph{Nodes covered: } for every node~$v$ in~$\primal{}$, there is a node~$t$ in~$T$ such that~$v\in\chi(t)$; (ii) \emph{Edges covered: } for every edge~$e$ of~$\primal{}$, there is a node~$t$ in~$T$ with~$e\subseteq\chi(t)$; and (iii) \emph{Connectedness: } for every three nodes~$t_1$, $t_2$, $t_3$ in~$T$, whenever~$t_2$ lies on the unique path between~$t_1$ and~$t_3$, then~$\chi(t_1)\cap\chi(t_3)\subseteq\chi(t_2)$. 
The \emph{width} of~$\mathcal{T}$ is the largest value~$|\chi(t)|$ over all nodes~$t$ in~$T$ and the \emph{treewidth} of~$\primal{}$ is the smallest width among every tree decomposition of~$\primal{}$.

 %


\section{Upper Bounds for Utilizing Treewidth by Dynamic Programming}\label{sec:ubs}

For proving the existence of parameterized algorithms for a problem when considering treewidth, the famous meta-theorem by Courcelle~\cite{Courcelle90} has been established.
While this theorem and several extensions thereof have been oftentimes invoked to prove theoretical results for problems parameterized by treewidth, such theorems do not necessarily provide precise runtime guarantees.
Alternatively, there is a more direct way to exploit treewidth and to obtain concrete upper bounds.
Indeed, one of the most prominent methods to directly utilize treewidth~\cite{RobertsonSeymour86} is by means of \emph{dynamic programming on tree decompositions}~\cite{CyganEtAl15}.
Thereby, the method of dynamic programming~\cite{Bellman54}, which generally refers to breaking down problems in a divide-and-conquer fashion, is guided along a tree decomposition, where the decomposition is traversed in post-order (bottom-up traversal) such that during the traversal a table is computed for each node of the decomposition.
While for a given graph the computation of a tree decomposition of minimal width (treewidth) is \NP-hard, it is possible to efficiently approximate treewidth~\cite{Bodlaender96} and compute a tree decomposition,
and there are also numerous efficient heuristics as well as exact solvers available~\cite{Dell17a}.
The literature distinguishes plenty of research on dynamic programming of tree decompositions for diverse problems and formalisms~\cite{CyganEtAl15}. 
%
%
%
This section concerns such algorithms,
whereby we particularly focus on key fragments of answer set programming, which have not been studied yet.
We briefly sketch the ideas that results in an algorithm for deciding whether a normal program admits an answer set.
It is known that one can encode a normal logic program into a Boolean formula
with a subquadratic overhead in the number of variables by means of so-called level mappings~\cite{LinZhao03,Janhunen06}.
The idea of these level mappings is to encode some derivation order for the variables that are supposed to hold,
thereby avoiding cyclic derivations. 
However, we demonstrate in the thesis that in general the overhead caused by these reductions is unbounded in the treewidth of the primal graph~$\primal{\Pi}$ for a given normal logic program~$\Pi$.
Consequently, it turns out that for designing a dynamic programming algorithm that is guided along a tree decomposition~$\mathcal{T}=(T,\chi)$ of~$\primal{\Pi}$, 
one needs to relax the notion of level mappings.
This idea leads to the concept of \emph{local level mappings}, where we order variables only locally within a node~$t$ of~$T$,
which relaxes the ``global'' order and therefore indicates only some relative order among variables in~$\chi(t)$.
The relaxation to local level mappings 
might result in several computed solutions per answer set of~$\Pi$, thereby potentially losing the one-to-one (bijective) characterization.
However, if the relative orderings corresponding to local level mappings are properly maintained and ``synchronized'' between neighboring nodes of~$t$,
we arrive at an algorithm that is still sufficient for \emph{deciding} whether~$\Pi$ admits an answer set, which yields the following result.

\begin{theorem}[Upper bound for normal programs]\label{thm:upperbound}
Let~$\Pi$ be an arbitrary normal logic program, where the treewidth of~$\primal{\Pi}$ is~$k$.
Then, deciding whether~$\Pi$ admits an answer set can be achieved in time~$2^{{\mathcal{O}(k\cdot\log(k))}}\cdot\poly(\Card{\var(\Pi)})$.
\end{theorem}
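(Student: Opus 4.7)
The plan is to perform dynamic programming along a nice tree decomposition $\mathcal{T}=(T,\chi)$ of $\primal{\Pi}$ of width $k$, which can be computed in time $2^{\mathcal{O}(k)}\cdot\poly(\Card{\var(\Pi)})$ by known approximation algorithms. At each node $t\in T$, processed in post-order, I would store a table whose rows are triples $(I_t,W_t,\prec_t)$, where $I_t\subseteq\chi(t)$ is a candidate partial assignment, $W_t\subseteq I_t$ is the set of \emph{witnessed} atoms for which a supporting rule has already been introduced below $t$, and $\prec_t$ is a local strict linear order on $I_t$. Intuitively, $\prec_t$ is a local level mapping that records only the relative derivation order of the atoms currently in $\chi(t)$; this is the key relaxation that keeps the bag size as the sole source of exponentiality, bypassing the variable blow-up of global level-mapping encodings.

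The tables are filled by the usual case distinction for nice decompositions. At a leaf, the unique row is trivial. On introducing a variable $v$, I branch on $v\in I_t$ and, in the positive case, pick a position for $v$ in $\prec_t$; this adds a factor of $|\chi(t)|+1$. On introducing a rule $r$, I keep only rows that classically satisfy $r$, and I set $h\in W_t$ for $h\in H_r\cap I_t$ whenever every $b\in B^+_r$ either already lies in $W_t$ or satisfies $b\prec_t h$, and no $b\in B^-_r$ lies in $I_t$. On forgetting $v$, I drop every row in which $v\in I_t\setminus W_t$ (such a $v$ can no longer be supported, since by the edges-covered property every rule mentioning $v$ has been processed) and project $\prec_t$ onto the remaining atoms, preserving their relative order. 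At a join node, two child rows are combined precisely when their assignments, witness sets, and local orders all agree on $\chi(t)$, and the resulting witness component is their union.

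For correctness, soundness follows by using the connectedness property of $\mathcal{T}$ to stitch the local orders of a surviving root row into a globally consistent partial order whose linear extensions yield genuine level mappings certifying stability; completeness follows because any answer set of $\Pi$ carries a classical level mapping whose bag-wise restriction is a legal row at every node. The runtime bound is then immediate: each table row at $t$ is one of at most $2^{|\chi(t)|}\cdot 2^{|\chi(t)|}\cdot|\chi(t)|!\le 2^{\mathcal{O}(k\log k)}$ combinations, each node operation is polynomial in the table sizes, and the decomposition has $\mathcal{O}(k\cdot\Card{\var(\Pi)})$ nodes, giving the claimed $2^{\mathcal{O}(k\log k)}\cdot\poly(\Card{\var(\Pi)})$. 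The main obstacle is not the dynamic programming scaffolding but the design of properly \emph{synchronized} local level mappings: one must show that collapsing to a relative order at each bag retains enough information for stability, that the forget step never discards a globally extensible partial order, and that the join step correctly propagates witnesses across branches of $T$, all while tolerating the loss of the one-to-one correspondence with answer sets. This is precisely where the concept of local level mappings sketched in the excerpt earns its keep.
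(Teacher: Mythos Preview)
Your proposal is correct and follows essentially the same approach as the paper: dynamic programming along a tree decomposition where each table row carries a partial interpretation together with a \emph{local level mapping}, i.e., a relative ordering on the atoms of the current bag, yielding the $k!$ factor and hence the $2^{\mathcal{O}(k\log k)}$ bound. Your identification of the crux---that synchronizing these local orders across neighboring bags is the only non-routine ingredient, and that bijectivity with answer sets is sacrificed while decidability is retained---mirrors exactly what the paper sketches.
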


Notably, later in Section~\ref{sec:lbs} we show  that significant improvements of this runtime are not expected in general, cf., Theorem~\ref{thm:lowerbound}.
This is indeed different to classical complexity, where \SAT and the decision problem for normal logic programs are both of similar hardness, namely \NP-complete.

\section{Decomposition-Guided Reductions for Treewidth}\label{sec:dg}

There are several problems in logic and artificial intelligence, 
for which dedicated solutions and systems based on utilizing treewidth have been proposed.
This can be witnessed by the existence of specialized implementations, e.g.,~\cite{CharwatWoltran19,FichteHecherZisser19,KiljanPilipczuk18},
but also more general frameworks have been introduced~\cite{BliemEtAl16,BannachBerndt19,LangerEtAl12}.
Interestingly, also one of the winning solvers~\cite{KorhonenJaervisalo21} of the most-recent model counting competition~\cite{FichteHecherHamiti21}, 
which focuses on solving the canonical counting problem \sharpSAT of \SAT, explicitly exploits treewidth and demonstrates surprising performance gains.

%

Inspired and motivated by utilizing treewidth in order to solve problems efficiently, the question is raised, how instances between different formalisms
can be converted (reduced) in a way that preserves structural properties as far as possible.
%
%
Specialized reductions with guarantees for treewidth are not only interesting in theory, but also have practical relevance, as they might enable efficient solving procedures for further formalisms of different areas. As an example, a reduction to \sharpSAT that linearly preserves (or only slightly increases) treewidth would enable other problem formalisms to benefit from utilizing treewidth-based counting solvers. 
%
%
These considerations are addressed by means of specialized reductions, which are referred to by \emph{decomposition-guided (DG)}.
%
%
The idea of these reductions is inspired by dynamic programming on tree decompositions, as briefly introduced in the previous section, where problems are solved in parts by traversing the tree from the leaves towards the root. Analogously, such a DG reduction reduces a given instance in parts, thereby being guided along a tree decomposition  in order to establish guarantees for the treewidth of the resulting instance.
%

The simplified concept of DG reductions is highlighted in Figure~\ref{fig:decompguided2},
where a given instance~$\inst$ of a source problem~$\prob$ and a decomposition~$\mathcal{T}$ of~$\primal{\inst}$ are assumed.
DG reductions have the advantage that in addition to the resulting instance, they automatically give rise to a tree decomposition~$\mathcal{T}'$ of the resulting instance, which immediately establishes the relation between~$\mathcal{T}$ and~$\mathcal{T}'$.
Further, if such a reduction works for any tree decomposition of the source instance, it immediately yields treewidth dependencies and guarantees in the process, thereby oftentimes allowing for simple transformations that preserve the treewidth.
%



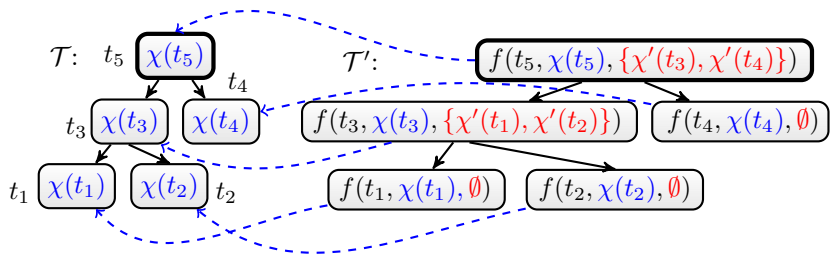
\begin{figure}\vspace{-.1em}
\centering
\begin{tikzpicture}[node distance=1mm, scale=0.15]%
\def\nodedist{0.7em}
\tikzset{every path/.style=thick}
\node (ableft) [tdnode,label={[yshift=-0.25em,xshift=0.25em] left:$t_3$}] {$\textcolor{blue}{\chi(t_3)}$};
\node (leaf1) [below=\nodedist of ableft,xshift=-2em, tdnode,label={[yshift=-0.25em,xshift=0.1em]left:$t_1$}] {$\textcolor{blue}{\chi(t_1)}$};
\node (leaf12) [below=\nodedist of ableft,xshift=1.5em, tdnode,label={[yshift=-0.25em,xshift=-0.1em]right:$t_2$}] {$\textcolor{blue}{\chi(t_2)}$};
\node (leaf2) [tdnode,label={[xshift=-1.0em, yshift=-0.15em]above right:$t_4$}, right = 0.5em of ableft]  {$\textcolor{blue}{\chi(t_4)}$};
\coordinate (middle) at ($ (ableft.north east)!.5!(leaf2.north west) $);
\node (root) [tdnode,ultra thick,label={[]left:$t_5$}, above = \nodedist of middle] {$\textcolor{blue}{\chi(t_5)}$};
\node (llabel) [left=of root,xshift=-1.5em] {$\mathcal{T}$:};
%
\coordinate (top) at ($ (root.north east)+(3.5em,0) $);
\coordinate (bot) at ($ (top)+(0,-4em) $);
%
\draw [stealth'-] (leaf1) to (ableft);
\draw [stealth'-] ($(leaf12.north)+(0.2em,-0.1em)$) to ($(ableft.south)+(-0.25em,0.0em)$);
\draw [-stealth'] (root) to (ableft);
\draw [-stealth'] (root) to (leaf2);
%
%
\node (rleaf1) [right=5em of ableft,tdnode,label={[yshift=-0.25em,xshift=0.25em] left:$ $}] {$\qquad\qquad\quad\qquad\qquad\qquad\quad$};
\node (rleaf1p) [right=5em of ableft,xshift=0.35em,inner sep=0.5] {$f(t_3,\textcolor{blue}{\chi(t_3)}, \textcolor{red}{\{\chi'(t_1), \chi'(t_2)\}})$};
\node (rem1) [below=1em of rleaf1,xshift=-2em, tdnode,label={[yshift=-0.25em,xshift=0.3em]left:$ $}] {$\qquad\qquad\qquad$};
\node (rem1p) [below=\nodedist of rleaf1,yshift=-0.5em,inner sep=0.5,xshift=-2.05em] {$f(t_1,\textcolor{blue}{\chi(t_1),\textcolor{red}{\emptyset}})$};
\node (remab) [below=\nodedist of rleaf1,yshift=-0.3em,xshift=5.5em, tdnode,label={[yshift=-0.25em,xshift=-0.1em]right:$ $}] {$\quad\qquad\qquad\quad$};
\node (remabp) [below=\nodedist of rleaf1,yshift=-0.5em,inner sep=0.5,xshift=5.5em] {$f(t_2,\textcolor{blue}{\chi(t_2)},\textcolor{red}{\emptyset})$};
%
\node (rleaf2) [tdnode,label={[xshift=-0.0em, yshift=-0.15em]above right:$ $}, right = 0.5em of rleaf1]  {$\qquad\qquad\qquad$};
\node (rleaf2p) [right=0.5em of rleaf1,yshift=-0.4em,inner sep=0.5,xshift=.5em,yshift=0.4em,] {$f(t_4,\textcolor{blue}{\chi(t_4)},\textcolor{red}{\emptyset})$};
%
%
\coordinate (middle) at ($ (rleaf1.north east)!.5!(rleaf2.north west) $);
\node (join) [tdnode,ultra thick,label={[xshift=-0.3em]right:$ $}, above  = \nodedist of middle] {\qquad\qquad\qquad\qquad\qquad\qquad\qquad};
\node (llabel) [left=of join,xshift= -3em] {$\mathcal{T}'$:};
\node (joinp) [above = \nodedist of middle,yshift=0.25em,xshift=-.05em,inner sep=0.5] {$f(t_5,\textcolor{blue}{\chi(t_5)}, \textcolor{red}{\{\chi'(t_3), \chi'(t_4)\}})$};
%
\coordinate (top) at ($ (join.north east)+(3.5em,0) $);
\coordinate (bot) at ($ (top)+(0,-4em) $);
%
\draw [stealth'-] (rem1) to (rleaf1);
\draw [stealth'-] ($(remab.north)+(0.2em,-0.1em)$) to ( $(rleaf1.south)+(-0.0em,0.0em)$);
\draw [-stealth'] (join) to (rleaf1);
\draw [-stealth'] ($(join.south)+(0em,0em)$) to (rleaf2);
\draw[dashedarrow,out=-170,in=-50,blue] (rem1) to (leaf1);
\draw[dashedarrow,out=-168,in=-40,blue] (remab) to (leaf12);
\draw[dashedarrow,out=-165,in=-27,blue] (rleaf1) to ($(ableft.south east)+(-2.0em,0.0em)$);
\draw[dashedarrow,blue,out=-191,in=14,blue] (rleaf2) to (leaf2);
\draw[dashedarrow,out=-180,in=30,blue] (join) to ($(root.north)$);
\end{tikzpicture}
\vspace{-1.75em}
\caption{
Simplified illustration of a DG reduction from a source problem~$\prob$ to a destination problem~$\prob'$. Thereby, we assume that an instance~$\inst$ of problem~$\prob$ as well as a tree decomposition~$\mathcal{T}=(T,\chi)$ of~$\primal{\inst}$ is given. The reduction depends on both~$\inst$ as well as decomposition~$\mathcal{T}$ and is therefore \emph{decomposition-guided (DG)}. It is constructed for every node~$t$ of~$T$ and yields a tree decomposition~$\mathcal{T}'=(T,\chi')$ of~$\primal{\inst'}$, where~$\inst'$ is the obtained instance of problem~$\prob'$. Further, we have that for every node~$t$ of~$T$, $\chi'(t)$ functionally depends on~$\chi(t)$ as well as~$\chi'(t')$ for every child node~$t'$ of~$t$ (see~$f$).
%
%
%
%
%
%
%
}\label{fig:decompguided2}
\end{figure}

%

%
%

\section{Lower Bounds by Decomposition-Guided Reductions}\label{sec:lbs}
Recall that in Section~\ref{sec:ubs} we briefly sketched how upper bounds for treewidth-based algorithms are obtained.
Alternatively one can often prove upper bounds for a problem of interest by utilizing a DG reduction to~$\QBFSAT$ that linearly preserves the treewidth, which yields the same upper bounds as those for evaluating QBFs, cf., Table~\ref{tab:overview}.
%
%
Thereby, the natural question of whether one can improve such runtime results arises.
%
 %
As a result, scientists have been establishing lower bounds under the widely believed exponential time hypothesis (ETH) for particular problems, whereby also problems on higher levels of the polynomial hierarchy have been pursued~\cite{MarxMitsou16,LampisMitsou17}.
%
%
%
%
Despite those efforts, a general \emph{methodology} that enables and simply supports the process of proving such conditional lower bounds in a general way.
%
%
We address this shortcoming by providing such a methodology for proving precise conditional lower bounds that is based on the following lower bound result for~$\QBFSAT$, obtained with the help of DG reductions.
%

%
\begin{theorem}[Lower bound for $\ell\hy\QBFSAT$]\label{lab:primqbflb}
Let~$Q$ be an arbitrary QBF of quantifier rank~$\ell$ such that the treewidth
of~$\primal{Q}$ is~$k$. Then, assuming that the ETH holds, one cannot decide whether~$Q$ evaluates to true (or is valid) in time~$\tower(\ell,o(k))\cdot 2^{o(\Card{\var(Q)})}$.
%
\end{theorem}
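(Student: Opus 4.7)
The plan is to proceed by induction on the quantifier rank $\ell$. For the base case $\ell=1$, the problem is plain $\SAT$, and since any $n$-variable formula has treewidth at most $n$, an algorithm running in $\tower(1,o(k))\cdot 2^{o(n)} = 2^{o(k)+o(n)} = 2^{o(n)}$ would already contradict the ETH. For the inductive step I would build a decomposition-guided reduction from $(\ell-1)$-$\QBFSAT$ to $\ell$-$\QBFSAT$ that, given an instance of treewidth $k$ on $n$ variables, produces an equivalent instance of treewidth $O(\log k)$ on $\poly(n,k)$ variables, adding exactly one quantifier alternation. Combining this with the inductive hypothesis gives the claim for rank $\ell$ via the identity $\tower(\ell,o(\log k)) = \tower(\ell-1, 2^{o(\log k)}) = \tower(\ell-1, o(k))$, where the last equality uses $2^{o(\log k)} = k^{o(1)} = o(k)$: any algorithm violating the claimed $\tower(\ell,o(k))$-bound for the reduced instance would, after pulling back through the reduction, violate the previously established bound for rank $\ell-1$.

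The core technical step is constructing this DG reduction. Following the template sketched in Figure~\ref{fig:decompguided2}, I would start from a nice and balanced tree decomposition $\mathcal{T}=(T,\chi)$ of width $k$ of the source QBF~$\Psi$ and construct the target QBF~$\Psi'$ bag-by-bag. The idea is to simulate one level of the natural bag-level dynamic programming for~$\Psi$ by a fresh innermost quantifier alternation that binary-encodes the $2^k$ possible bag-assignments using only $O(\log k)$ auxiliary variables per bag: the new alternation selects a $\log k$-bit index into~$\chi(t)$ and the matrix enforces that the ``revealed'' bit is consistent both with the local clauses of~$\Psi$ and with the values chosen in neighbouring bags through $O(\log k)$ copies of pointer and value variables threaded along the edges of $T$. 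Each bag then contributes only $O(\log k)$ vertices to its local neighbourhood in the target primal graph, so stitching these gadgets together automatically produces a tree decomposition~$\mathcal{T}'$ of $\primal{\Psi'}$ of width $O(\log k)$, certifying the treewidth bound.

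The main obstacle will be engineering the gadgets so that simultaneously (i) the quantifier game on~$\Psi'$ faithfully simulates the truth of~$\Psi$, (ii) no bag of the resulting decomposition ever exceeds $O(\log k)$ vertices, and (iii) exactly one alternation is introduced rather than several. Item (ii) forces the reduction to never materialise an entire bag-assignment as explicit variables in the target primal graph, which is why the indirection through $\log k$-bit pointer variables together with equality propagation governed by the connectedness property of~$\mathcal{T}$ is essential; item (iii) is handled by aligning the polarity of the new alternation with that of the innermost existing quantifier block in~$\Psi$. Once the gadget construction is fixed, verifying that the reduction is genuinely decomposition-guided becomes a mechanical induction over~$T$, the polynomial variable blowup is a straightforward counting argument, and semantic correctness follows from unfolding the quantifier game on~$\Psi'$ against the bag-DP semantics of~$\Psi$, which closes the induction and yields the claimed $\tower(\ell,o(k))\cdot 2^{o(\Card{\var(Q)})}$ lower bound.
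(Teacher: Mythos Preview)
Your proposal is correct and follows essentially the same route as the paper: an induction on the quantifier rank with ETH for \SAT as the base case, and as the inductive step a decomposition-guided self-reduction that trades one additional quantifier alternation for an exponential compression of treewidth from~$k$ to~$O(\log k)$, exactly as summarised in Table~\ref{tab:lbs:idea}. Two small points to watch when you flesh it out: the blow-up in variables must be \emph{linear} (not merely $\poly(n,k)$) for the $2^{o(\Card{\var(Q)})}$ factor to survive the induction, and in item~(iii) the new block must have the \emph{opposite} polarity to the current innermost block (otherwise no alternation is added).
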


The proof idea of Theorem~\ref{lab:primqbflb} and the whole approach is innovative, since it uses a DG self-reduction from~$\ell\hy\QBFSAT$ to~$(\ell+1)\hy\QBFSAT$.
In contrast to existing lower bound proof approaches, we exploit an additional quantifier to constructively decrease treewidth exponentially from~$k$ to~$\log(k)$. Techniques that have been used before, oftentimes do not directly relate the parameters, treewidth in our case, of the source and destination instance. The concept is sketched and visualized in Table~\ref{tab:lbs:idea}.

Overall, our approach enables many further conditional lower bound results, which results in a new methodology for proving them. Instead of directly applying ETH by reducing from \SAT, one can simply decide on the quantifier rank~$\ell$ and reduce from~$\ell\hy\QBFSAT$ to the problem of interest. 
If this reduction is carried out via a DG reduction that linearly preserves the treewidth, we immediately establish an $\ell$-fold exponential lower bound. 
Especially for problems higher in the polynomial hierarchy, this greatly simplifies the process, by avoiding a barrier of several levels of exponentiality that one alternatively would have to bypass in a specialized reduction and heavily problem-specific way.
For deeper insights and formal details, we refer to the thesis~\cite[Chapter 5]{Hecher21}.
The thesis also contains a large table of formalisms and corresponding novel conditional lower bounds for treewidth that can be easily obtained by applying this new methodology~\cite[Table 6.1]{Hecher21}.


\begin{table}[t]
		\fontsize{9}{7.5}\selectfont
\centering
\hspace{-.45em}\begin{tabular}[H]{@{\hspace{-1em}}HHH@{\hspace{-.25em}}cc@{\hspace{-1em}}}
\toprule
		& & 
		& \multicolumn{1}{c}{Approach for 2\hy\QBFSAT~\cite{LampisMitsou17}} & \multicolumn{1}{c}{\textbf{Novel approach for $\ell\hy\QBFSAT$}}\\[-.1em]\midrule
		&$(\SAT, {k})$ & $(\SAT, {k})$ & $(\SAT, {k})$ & $(\SAT, {k})$\\[.25em]
		&$\downarrow$ & $\downarrow$ & $\downarrow$ & $\downarrow_{\textbf{DG reduction}}$\\[.5em]
		&$(\prob, {f(n)})$ & $(\prob, {g(k)})$ & $(2\hy\QBFSAT, {\log(n)})$ & $(2\hy\QBFSAT, {\log(k)})$\\\\[-.5em]\cmidrule{4-5}\\[-.5em]
		\multicolumn{2}{c}{}&&{$(\SAT, k)$} & {$(\ell\hy\QBFSAT, {k})$}\\[.25em]
		\multicolumn{2}{c}{}&&{{$\downarrow_{\textbf{?}}$}} & {$\downarrow_{\textbf{DG reduction}}$}\\[.5em]
		\multicolumn{2}{c}{}&&{$(3\hy\QBFSAT, {\log(\log(n)})$} & {$(({\ell{+}1})\hy\QBFSAT, {\log(k))}$}\\\bottomrule
	\end{tabular}\vspace{-.25em}
	\caption{Existing proof approaches for runtime lower bounds are similar to the left column, where ETH is applied by reducing from \SAT.
There, the parameter (treewidth)~$k$ of the formula is not used directly. Instead, the parameter of the destination instance is a function in the number~$n$ of variables of the formula.
The idea of our approach is illustrated in the right column, where treewidth~$k$ of the formula is directly modified by exponentially decreasing it via a DG reduction. The advantage
of our constructive and direct approach is that it can be easily generalized to arbitrary quantifier ranks~$\ell$ (bottom right).
	}\label{tab:lbs:idea}
\end{table}

\subsection*{Are Normal Programs ``Harder'' than \SAT?}\label{sec:asplb}

The results above have immediate consequences also for the \ASP formalism.
Indeed, by utilizing a DG reduction from $2\hy\QBFSAT$ and applying Theorem~\ref{lab:primqbflb}, one can show the following conditional lower bound for logic programs, which closes the gap to the existing upper bound~\cite{JaklPichlerWoltran09}, cf., Table~\ref{tab:overview}. 

\begin{theorem}[Lower bound for logic programs]\label{thm:disjasplb}
Let~$\Pi$ be an arbitrary logic program, where the treewidth of~$\primal{\Pi}$ is~$k$.
Then, assuming the ETH, one cannot decide whether~$\Pi$ admits an answer set in time~$2^{2^{o(k)}}\cdot\poly(\Card{\var(\Pi)})$.
\end{theorem}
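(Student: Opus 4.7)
}
The strategy is to apply the DG methodology of Section~\ref{sec:lbs}, instantiated at $\ell=2$. Concretely, I will construct a decomposition-guided reduction from $2\hy\QBFSAT$ to the existence of an answer set for a disjunctive logic program that linearly preserves treewidth, and then combine it with Theorem~\ref{lab:primqbflb} for $\ell=2$, which states that $2\hy\QBFSAT$ is not decidable in time~$2^{2^{o(k)}}\cdot 2^{o(\Card{\var(Q)})}$ under ETH.

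The starting point is a QBF~$Q=\exists V_1.\forall V_2.F$ together with a width-$k$ tree decomposition~$\mathcal{T}=(T,\chi)$ of~$\primal{Q}$. The target program~$\Pi$ is obtained by adapting the classical Eiter-Gottlob saturation encoding in a bag-by-bag fashion along~$\mathcal{T}$: for every existential variable~$v\in V_1$ a choice~$v\vee\bar v\leftarrow{}$ is added, for every universal~$v\in V_2$ the same guess is combined with the saturation target, and every clause of~$F$ whose variables fit into some bag is translated into the standard rule that derives a ``witness'' atom when the clause is falsified. The crucial deviation from the global Eiter-Gottlob translation is that the single witness~$w$ and the saturating atoms for the universal variables are \emph{localized}: for each bag~$t$ I introduce fresh copies~$w_t$ (and copies~$v_t,\bar v_t$ of the universal atoms currently in~$\chi(t)$) and add synchronization rules between~$\chi(t)$ and~$\chi(\parent(t))$ that glue the copies together exactly along the shared variables. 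Finally, at the root one collects the copies and enforces saturation.

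For correctness one argues as usual: the choice rules together with the synchronized copies cover every $V_1$-assignment, and the chained saturation guarantees that~$\Pi$ admits an answer set iff for the chosen~$V_1$-assignment every~$V_2$-assignment satisfies~$F$, i.e.~iff~$Q$ is valid. For the treewidth guarantee, the construction is by definition DG, so Figure~\ref{fig:decompguided2} yields a decomposition~$\mathcal{T}'=(T,\chi')$ of~$\primal{\Pi}$ whose bags are~$\chi(t)$ enlarged by the~$O(1)$ local auxiliary atoms~$\{w_t\}\cup\{v_t,\bar v_t\SM v\in\chi(t)\cap V_2\}$; hence~$\tw(\primal{\Pi})\le 2k+\bigO(1)$ and~$\Card{\var(\Pi)}=\poly(\Card{\var(Q)})$.

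Putting the pieces together, a hypothetical algorithm deciding answer-set existence of~$\Pi$ in time~$2^{2^{o(k')}}\cdot\poly(\Card{\var(\Pi)})$, where~$k'=\tw(\primal{\Pi})$, would, by composition with the DG reduction, decide~$2\hy\QBFSAT$ in time~$2^{2^{o(k)}}\cdot 2^{o(\Card{\var(Q)})}$, contradicting Theorem~\ref{lab:primqbflb}. The step I expect to be the main obstacle is the localization of the saturation machinery: the textbook Eiter-Gottlob encoding introduces a single atom~$w$ connected to all clauses and all universal variables, which blows up treewidth to~$\Theta(\Card{\var(Q)})$. Carrying the witness and the universal copies bag-wise, and proving that the additional synchronization rules neither destroy the soundness of saturation nor introduce long-range edges in~$\primal{\Pi}$, is the technical core of the reduction; everything else then follows routinely from the DG template together with Theorem~\ref{lab:primqbflb}.
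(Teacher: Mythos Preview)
Your proposal is correct and follows exactly the route the paper indicates: a decomposition-guided, treewidth-preserving variant of the Eiter--Gottlob saturation reduction from $2\hy\QBFSAT$ to disjunctive \ASP, combined with Theorem~\ref{lab:primqbflb} for~$\ell=2$. One small slip: the set~$\{w_t\}\cup\{v_t,\bar v_t\SM v\in\chi(t)\cap V_2\}$ has size~$\bigO(k)$, not~$\bigO(1)$, but your stated bound~$\tw(\primal{\Pi})\le 2k+\bigO(1)$ already accounts for that, so the argument goes through unchanged.
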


Interestingly, with the methodology of the previous section, there are no larger barriers in the course of proving Theorem~\ref{thm:disjasplb}.
%
%
Also, the result is in line with the expectations one might have due to classical complexity theory, as the problem is located on the second level of the polynomial hierarchy%
~\cite{BrewkaEiterTruszczynski11}. 
This is different for the import fragment of \emph{normal logic programs}, where despite the problem of deciding whether a normal program admits an answer set is \NP-complete, its ``hardness'' for treewidth has been open, cf., Table~\ref{tab:overview}.
 %
 %
It has been known that in general a normal program cannot be translated into a Boolean formula such that the answer sets are bijectively captured by the satisfying assignments of the formula, without a subquadratic overhead in the number
of (auxiliary) variables~\cite{LifschitzRazborov06,Janhunen06}. Nevertheless, the following \emph{question has been left open}: Is deciding whether a normal logic program admits an answer set actually ``harder'' than \SAT when considering treewidth?


Indeed, this question can be answered affirmatively when assuming the widely believed ETH. However, the proof is not only more involved than Theorem~\ref{thm:disjasplb}, also its consequences reach further.
%

\begin{theorem}[Lower bound for normal programs]\label{thm:lowerbound}
Let~$\Pi$ be an arbitrary normal logic program, where the treewidth of~$\primal{\Pi}$ is~$k$.
Then, assuming the ETH, one cannot decide whether~$\Pi$ admits an answer set in time~$2^{{o(k\cdot\log(k))}}\cdot\poly(\Card{\var(\Pi)})$.
\end{theorem}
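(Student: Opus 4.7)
The plan is to establish this lower bound via a \emph{decomposition-guided (DG) reduction} from \SAT, leveraging the sparsification lemma so that under ETH no $2^{o(n)}\cdot\poly$ algorithm exists for 3\hy\SAT on~$n$ variables. The target is to construct, for a given 3\hy\SAT instance~$F$ on~$n$ variables, a normal program~$\Pi$ together with a tree decomposition of~$\primal{\Pi}$ of width~$k=O(n/\log n)$ such that $\Pi$ admits an answer set iff $F$ is satisfiable. A hypothetical $2^{o(k\cdot\log k)}\cdot\poly(\Card{\var(\Pi)})$ algorithm would then decide 3\hy\SAT in time $2^{o((n/\log n)\cdot \log(n/\log n))}\cdot\poly(n) = 2^{o(n)}$, contradicting ETH. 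This matches the pattern already used in the thesis for other slightly super-exponential bounds (cf.\ $\iota$-tight \ASP), with the compression ratio $\log n$ being the numerical source of the extra $\log k$ in the exponent.

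The key technical ingredient is a normal-program \emph{choice gadget} that exploits exactly what separates normal from tight programs: foundedness enforced through positive cycles. First, I would partition the $n$ variables of~$F$ into $n/\log n$ blocks of size $\log n$. For each block, I would introduce $2^{\log n}=n$ fresh atoms~$a_1,\ldots,a_n$, one per assignment to that block, and connect them by a cyclic system of normal rules of the form $a_i \leftarrow a_{i-1}, \ldots$ interleaved with default-negated reset atoms so that stability, rather than mere satisfaction, forces exactly one of~$a_1,\ldots,a_n$ to hold in any answer set. Although such a gadget has positive cycles of length~$\Theta(n)$, its primal graph admits a path-like decomposition of constant width, where each bag witnesses only the transition between a constant number of consecutive atoms. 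Clauses of~$F$ are encoded as integrity constraints over the at most three block-indicator atoms they touch, and the global decomposition interlaces the per-block path decompositions with small join bags collecting the three relevant indicators; the total width is~$O(n/\log n)$ because at most that many blocks are simultaneously active at any bag. By construction this is a DG reduction in the sense of Section~\ref{sec:dg}: the bag~$\chi'(t)$ of~$\primal{\Pi}$ is a function of~$\chi(t)$ and of the bags of the block-gadgets already reached in the post-order traversal.

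The main obstacle is the design of the choice gadget so that it simultaneously (i) respects the normal format~$\Card{H_r}=1$, (ii) uses stability to force precisely one of $n$ indicator atoms per block, and (iii) remains locally of bounded width despite its intrinsic global cyclicity. The elementary binary trick $a\leftarrow \neg b$, $b\leftarrow \neg a$ does not suffice, since iterating it would require~$\log n$ independent choices per block and blow up the per-bag content. What is required is a counter-style construction reminiscent of the \emph{local level mappings} used in the upper bound of Theorem~\ref{thm:upperbound}, but run adversarially: one encodes, along the cycle backbone, a local derivation order that any dynamic-programming algorithm must implicitly rediscover in order to verify foundedness, which is precisely the combinatorial source of the $k\cdot\log k$ exponent. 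Verifying correctness then amounts to showing that every answer set projects to a satisfying assignment of~$F$ (via the unique chosen~$a_i$ per block) and, conversely, that every satisfying assignment can be lifted to a founded model by choosing the derivation order along the cycles accordingly. The delicate part is checking that attaching the clause-integrity constraints to the joined bags does not push the width beyond $O(n/\log n)$, which follows from the fact that each clause lives entirely within the union of the constant-width gadgets of its three blocks.
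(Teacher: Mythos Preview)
Your reduction has a real gap at the clause-checking step. You write that ``clauses of~$F$ are encoded as integrity constraints over the at most three block-indicator atoms they touch,'' but a clause of~$F$ does \emph{not} touch three indicator atoms: it touches three \emph{variables} of~$F$, each of which lives inside a block with~$n$ indicator atoms, and whether a literal is satisfied depends on \emph{which} of those~$n$ indicators is the chosen one. To test a single literal you must distinguish the~$n/2$ indicators of its block that make it true from the~$n/2$ that make it false; this information is spread along the entire path gadget of that block and is not available at any single constant-size bag. Your ``small join bags collecting the three relevant indicators'' do not exist, because the relevant indicator is not known statically---it is what the answer set chooses. Any faithful encoding of the clause check therefore has to either (i) bring~$\Theta(n)$ indicators into one bag, or (ii) propagate a~$\log n$-bit summary of the choice to a designated output location of each block; in case~(ii) you are back to~$\log n$ atoms per block and the compression disappears. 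A second, independent issue is the choice gadget itself: a positive cycle~$a_i\leftarrow a_{i-1}$ with default-negated resets does not, under the stable semantics, yield ``exactly one of~$a_1,\ldots,a_n$''; the minimality condition kills unfounded cycles rather than selecting a single atom from them. You would need a genuinely different construction to get a one-out-of-$n$ choice of constant primal width, and none is given.

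The route taken in the thesis is different and avoids these difficulties. Rather than compressing a \SAT instance directly, it exploits the one feature that distinguishes normal from tight programs---the ability to express \emph{reachability} (transitive closure) through positive body dependencies---and reduces from a problem that already carries a~$2^{o(k\log k)}$ ETH lower bound, namely (a variant of) \textsc{Disjoint Paths} in the style of Lokshtanov--Marx--Saurabh. The DG reduction encodes each requested path by a chain of positive rules whose foundedness is precisely reachability from the source terminal; pairwise disjointness becomes a local constraint, so treewidth is preserved linearly. This is exactly the ``reachability and transitive closure'' hint given after the theorem statement, and it is why the~$k\log k$ factor is inherited rather than manufactured by block compression.
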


Notably, for treewidth the actual overhead is not only in the number of variables, but this problem is indeed harder to solve under ETH than \SAT.
An informal explanation lies in the observation that there exist programs, whose primal graphs have low treewidth, that can express broader reachability problems
and transitive closures such that the involved variables are required to be widely spread over \emph{any} tree decomposition of low width.


These findings and the algorithm of Section~\ref{sec:ubs} lead to a new family of logic programs, referred to by \emph{$\iota$-tight}.
There, $\iota$ represents for a logic program the ``degree'' between the class of tight ($\iota$=1) and normal ($\iota=k$) programs,
where~$k$ is the treewidth. 
The actual value of~$\iota$ then directly corresponds to the solving effort for treewidth, cf.,~Table~\ref{tab:overview},
which is between the evaluation of tight programs (similar to \SAT) and the evaluation of normal programs.


\section{A Complexity Landscape for Treewidth}\label{sec:landscape}
The findings of the previous section do not only yield a new methodology for proving lower bounds,
it also gives rise to a hierarchy of runtime classes that are useful for categorizing problems according to their immediate
hardness when utilizing treewidth. The basic definition of a family of these runtime classes is given below.

\complexityclass{\twc{i}{k}, for every~$i\in\Nat$}{Class \twc{i}{k} is the set of all problems parameterized by treewidth such that every instance~$\inst$ of these problems can be solved in time~$\tower(i,{\bigO(k)})\cdot\poly(\CCard{\inst})$, where~$k$ refers to the treewidth of ${\primal{\inst}}$ and $\CCard{\inst}$ is the size of~$\inst$.}

The definition of these classes is inspired by the work on general fixed-parameter runtime classes~\cite{DowneyEtAl07}; 
these classes are therefore contained in the broader class \FPT.
Observe that indeed~$\twc{0}{k}=\PP$. It is also immediate by the definition of these classes for any~$i\in\Nat$ we have~$\twc{i}{k}\subseteq \twc{i+1}{k}$.
Even further, under the ETH, the inclusions between those classes~$\twc{i}{k}$ are strict.
Consequently, we show next that these $\twc{i}{k}$ for~$i\in\Nat$ form a strict hierarchy assuming the exponential time hypothesis.

\begin{proposition}
Let~$i\in\Nat$. Then, under the ETH we have that~$\twc{i}{k} \subsetneq \twc{i+1}{k}$. 
\end{proposition}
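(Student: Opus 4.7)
The plan is to exhibit, for each $i \in \Nat$, a concrete problem that sits in $\twc{i+1}{k}$ but (under ETH) escapes $\twc{i}{k}$. The natural witness is $(i{+}1)\hy\QBFSAT$ parameterized by the treewidth of its primal graph. Membership in $\twc{i+1}{k}$ is immediate from the upper bound of Chen cited in Table~\ref{tab:overview}, which solves $\ell\hy\QBFSAT$ in time $\tower(\ell,\bigO(k))\cdot\poly(n)$. So the entire separation reduces to showing $(i{+}1)\hy\QBFSAT\notin\twc{i}{k}$, and for that Theorem~\ref{lab:primqbflb} is exactly the right tool.

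I would argue by contraposition: suppose some algorithm decides $(i{+}1)\hy\QBFSAT$ in time $\tower(i,ck)\cdot n^d$ for fixed constants $c,d$. The task is then purely to rewrite this bound so that it falls within the runtime $\tower(i{+}1,o(k))\cdot 2^{o(n)}$ that Theorem~\ref{lab:primqbflb} forbids under ETH. This requires absorbing one missing level of exponentiation on the treewidth side and absorbing the polynomial factor into $2^{o(n)}$ on the instance-size side.

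The key identity I would invoke is $\tower(i{+}1,m)=\tower(i,2^m)$, which trades one extra $2$-exponentiation for an exponentiation of the argument. Setting $f(k)\eqdef \lceil\log(ck)\rceil+1$ yields $f(k)=\bigO(\log k)=o(k)$ together with $2^{f(k)}\geq ck$, so by monotonicity of $\tower(i,\cdot)$ we get $\tower(i,ck)\leq\tower(i,2^{f(k)})=\tower(i{+}1,f(k))$. Likewise, $g(n)\eqdef \lceil d\log n\rceil+1=o(n)$ satisfies $n^d\leq 2^{g(n)}$. Multiplying the two estimates bounds the hypothesized runtime by $\tower(i{+}1,f(k))\cdot 2^{g(n)}$ with $f\in o(k)$ and $g\in o(n)$, contradicting Theorem~\ref{lab:primqbflb}.

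The main obstacle I foresee is bookkeeping rather than mathematical depth: one has to be careful that the witness $f(k)$ legitimately qualifies as $o(k)$ in the sense under which Theorem~\ref{lab:primqbflb} rules out algorithms, and that the $\bigO$ hidden in the definition of $\twc{i}{k}$ produces well-defined constants $c,d$ to plug in. The corner case $i=0$ causes no additional difficulty, since $\twc{0}{k}=\PP$ and the calculation above specializes to $\SAT=1\hy\QBFSAT\in\twc{1}{k}\setminus\PP$, which is nothing but the standard ETH-based separation.
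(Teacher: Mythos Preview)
Your proposal is correct and follows essentially the same approach as the paper: both use $(i{+}1)\hy\QBFSAT$ as the separating witness, invoke Chen's upper bound for membership in $\twc{i+1}{k}$, and appeal to Theorem~\ref{lab:primqbflb} together with the observation that $\tower(i,\bigO(k))$ is bounded by $\tower(i{+}1,o(k))$ for non-membership in $\twc{i}{k}$. Your write-up simply makes the bookkeeping (the explicit choices of $f$ and $g$, and the absorption of the polynomial factor into $2^{o(n)}$) more explicit than the paper's one-line justification.
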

\begin{proof}
%
Assume towards a contradiction that the ETH holds and at the same time we have that~$\twc{i}{k} \not\subsetneq \twc{i+1}{k}$.
Consequently, due to~$\twc{i}{k} \subseteq \twc{i+1}{k}$, we have that~$\twc{i}{k} = \twc{i+1}{k}$. However, this contradicts Theorem~\ref{lab:primqbflb} for problem~$(i+1)\hy\QBFSAT$,
which is in~$\twc{i+1}{k}$ (cf.,~\cite{Chen04a}), but under the ETH it is not in~$\twc{i}{k}$ since any function in~$\tower(i, \bigO(k))$ is also in the set $\tower(i+1, o(k))$ of functions.
\end{proof}

This hierarchy of runtime classes then yields to a new categorization of problems based on their complexity for treewidth.
These problems range from typical graph problems, extensions of Boolean satisfiability and logic programs, over questions in abstract argumentation and beyond. 
For details, we refer to the table of results~\cite[Table 6.1]{Hecher21}, which also contains results for a list of counting problems.

\section{Efficiently Implementing Treewidth-Aware Algorithms}\label{sec:solving}

Despite the strong lower bound results of Section~\ref{sec:lbs} and consequences of the previous section, we present an approach to design a solver that utilizes treewidth,
but is still capable of dealing with instances of \emph{high treewidth}.
%
%
To this end, we mainly focus on problems based on \SAT and canonical counting extensions like \sharpSAT, which have been gaining increasing importance for quantitative reasoning and AI in general.
%
%
%
%

Note that due to space limits, we can only briefly provide crucial ideas instead of completely discussing full-fledged implementations.
Our approach of efficiently utilizing high treewidth lies on the combination of three key concepts. 
\begin{enumerate}
	\item \textbf{Abstractions: }We are computing certain abstractions of the primal graph representation, which are obtained via heuristics. There, the structured instance parts are subject to being solved by means of dynamic programming that is guided along a tree decomposition of those abstractions. These abstractions are such that they cover only a part of the graph, which can be seen similar to the visual example of zooming out of a larger street map. Then, the remaining graph parts are actually larger sub-instances, where each of these sub-instances is forced to be within a unique tree decomposition node. 
	\item \textbf{Hybrid Solving: }Sub-instances as a result of building such an abstraction that are too unstructured to be tackled by dynamic programming are then solved by existing standard \SAT-based solvers. For those instances, there is not much hope to efficiently utilize structure-based measures like treewidth. However, oftentimes the size of those sub-instances has already been significantly reduced in the process, compared to the full instance. 
	\item \textbf{Nested Refinement: }Sub-instances that still contain some sort of structure are again decomposed, and, if needed, again abstractions are built. So, our approach approximates suitable abstractions of the primal graph that is highly structured (low treewidth). Then, during dynamic programming, sub-instances are simplified and there is again the chance to find some structure. In turn, nested refinement ensures that we can refine abstractions for simpler sub-instances at a later time, namely after simplifications of sub-instances during dynamic programming.
Note that the level of nesting is limited, i.e., if the nesting is too deep, we fall back to hybrid solving. 
\end{enumerate}


Our empirical results of this approach yield the following observations. Notably, our method allows us to solve instances with tree decompositions of widths beyond 260.
From a conceptual point of view, it seems that the hybrid approach is well-suited. Overall, the solving is guided along the basic structure
of the instance such that hardly structured instances are passed to existing \SAT-based solvers.
Further, we indeed observe a difference between problems of single-exponential runtime and those of double-exponential runtime in the treewidth.
For an extension of the problem \sharpSAT that is double-exponential in the treewidth, our approach is capable of successfully utilizing tree decompositions of widths up to 99. 
While this is still remarkable, it practically shows the difference between different levels of exponentiality for treewidth. 
Our observations thereby confirm the study of the classification of Section~\ref{sec:landscape}, where problems are categorized according
to their runtime dependence on the treewidth.

\section{Conclusion}\label{sec:conclusion}
This thesis raises a list of further research questions, which is also reflected by the increasing interest of treewidth%
%
\footnote{``Treewidth'' yields more than 22,000 results on Google Scholar (queried on March 18th, 2022).}.
In the thesis we establish DG reductions, which serve as a key for proving both upper and lower bounds.
%
However, strengths, weaknesses, as well as restrictions and potential extensions of these reductions are widely unexplored. 
Especially in the area of explainable AI, DG reductions could help in proving the correctness of solver runs
for extensions of \SAT when considering treewidth, e.g., counting problems.
%
%
Theorem~\ref{lab:primqbflb} provides a tool for proving precise conditional lower bounds for treewidth,
which we currently generalize to \emph{stronger parameters} as well.
%
%
More recent works lift and extend this result for \emph{constraint programming}, which enables our methodology to express more elaborated
lower bounds~\cite{FichteHecherKieler20}. However, we are certain that these results can be further generalized and applied, e.g., in the context of database theories.
%
%
There are also extensions of the ETH, where we expect that our methodology to yield even more concrete lower bounds.
%
%
In the light of theoretical studies between \SAT solving and potential relations for treewidth, e.g.,~\cite{AtseriasFichteThurley11},
we expect further consequences of the lower bounds of Theorems~\ref{thm:disjasplb} and~\ref{thm:lowerbound}
in the context of utilizing structural properties like treewidth for \ASP solvers.
%
%
Recent efforts of counting-based solvers also manage to combine treewidth and \SAT-based techniques, where treewidth serves as a heuristics within the solver~\cite{KorhonenJaervisalo21}.
We see potential synergies with our approach of Section~\ref{sec:solving}.
%





\bibliography{references}

\end{document}